\theoremstyle{plain}
\newtheorem{theorem}{Theorem}[section]
\newtheorem{lemma}[theorem]{Lemma}
\theoremstyle{definition}
\theoremstyle{remark}
\icmltitlerunning{Polybasic Speculative Decoding Through a Theoretical Perspective}
\begin{document}

\twocolumn[
\icmltitle{Polybasic Speculative Decoding Through a Theoretical Perspective}

\icmlsetsymbol{equal}{*}

\begin{icmlauthorlist}

\icmlauthor{Ruilin Wang}{xmu}
\icmlauthor{Huixia Li}{ByteD}
\icmlauthor{Yuexiao Ma}{xmu,ByteD}
\icmlauthor{Xiawu Zheng}{xmu,ai,pc}
\icmlauthor{Fei Chao}{xmu}
\icmlauthor{Xuefeng Xiao}{ByteD}
\icmlauthor{Rongrong Ji}{xmu,ai}

\end{icmlauthorlist}
\icmlaffiliation{xmu}{Key Laboratory of Multimedia Trusted Perception and Efficient Computing, Ministry of Education of China, Xiamen University, 361005, P.R. China}
\icmlaffiliation{ai}{Institute of Artificial Intelligence, Xiamen University}
\icmlaffiliation{pc}{Peng Cheng Laboratory, Shenzhen, China}
\icmlaffiliation{ByteD}{ByteDance}

\icmlcorrespondingauthor{Rongrong Ji}{rrji@xmu.edu.cn}
\icmlkeywords{Machine Learning, ICML}

\vskip 0.3in
]
\printAffiliationsAndNotice{}

\begin{abstract}
Inference latency stands as a critical bottleneck in the large-scale deployment of Large Language Models (LLMs). Speculative decoding methods have recently shown promise in accelerating inference without compromising the output distribution. However, existing work typically relies on a dualistic draft-verify framework and lacks rigorous theoretical grounding. In this paper, we introduce a novel \emph{polybasic} speculative decoding framework, underpinned by a comprehensive theoretical analysis. Specifically, we prove a fundamental theorem that characterizes the optimal inference time for multi-model speculative decoding systems, shedding light on how to extend beyond the dualistic approach to a more general polybasic paradigm. Through our theoretical investigation of multi-model token generation, we expose and optimize the interplay between model capabilities, acceptance lengths, and overall computational cost. Our framework supports both standalone implementation and integration with existing speculative techniques, leading to accelerated performance in practice. Experimental results across multiple model families demonstrate that our approach yields speedup ratios ranging from $3.31\times$ to $4.01\times$ for LLaMA2-Chat 7B, up to $3.87 \times$ for LLaMA3-8B, up to $4.43 \times$ for Vicuna-7B and up to $3.85 \times$ for Qwen2-7B---all while preserving the original output distribution. We release our theoretical proofs and implementation code to facilitate further investigation into polybasic speculative decoding.
\end{abstract}

\section{Introduction}
Large Language Models (LLMs) have substantially advanced Natural Language Processing (NLP), achieving leading performance in a wide range of tasks. Yet their exceptional capabilities are tempered by significant computational demands, particularly in low-latency scenarios. Among multiple acceleration techniques, \emph{speculative decoding}~\citep{stern2018blockwise,leviathan2023fast,xia2023speculative,chen2023accelerating,anonymous2025towards,anonymous2025judge,teng2024accelerating,zhang2024adaeagle} has emerged as a strategy to speed up inference while preserving output fidelity.

The current speculative decoding ecosystem largely hinges on \textit{draft-then-verify} paradigms, which spawn various sub-directions such as the design of lightweight draft models~\citep{leviathan2023fast,xia2023speculative,chen2023accelerating,kim2024speculative,svirschevski2024specexec,yin2024theoretical,sadhukhan2024magicdec}, hierarchical token structures~\citep{stern2018blockwise,miao2024specinfer,du2024glide}, and unified architectures~\citep{yi2024generation,cai2024medusa}. Verification strategies typically follow three approaches: greedy sampling, speculative sampling~\citep{leviathan2023fast}, and typical acceptance~\citep{cai2024medusa}. Despite these efforts, current strategies remain limited by a \textbf{dualistic} relationship between draft and target models~\citep{qin2024multi,liu2024parallel,gui2024boosting,khisti2024multi}, affecting key parameters such as acceptance length due to inherent capacity disparities between the two models. While some recent works~\citep{chen2023cascade,kimunified,spector2023staged} investigate multi-level drafts, they still employ a singular top-level target model. Moreover, the field has hitherto lacked an overarching theoretical framework to guide system design and provide robust performance guarantees.

In this paper, we introduce a principled \textbf{polybasic} speculative decoding framework that uses multiple interconnected models, grounded in a thorough theoretical analysis. Our investigation yields two central insights. First, we derive a fundamental relationship between the number of forward passes and average acceptance lengths that dictates optimal system-level inference speed. This relationship allows us to precisely quantify potential inference speedups when adding additional models. Second, we establish the capacity of \emph{speculative sampling} to enhance stability in acceptance lengths. By optimizing sampling parameters, we can reduce variance in token acceptance to achieve more predictable performance.

Building on these insights, we propose a unified architecture for polybasic speculative decoding wherein multiple draft models coordinate with each other and with a single target model. Our implementation guidelines detail how to select models, set speculation lengths, and implement multi-stage verification procedures that maximize throughput. Experimental evaluations demonstrate that this approach outperforms typical dualistic methods on diverse tasks including MT-bench~\citep{zheng2023judging}, translation, summarization, QA, mathematical reasoning, and Retrieval-Augmented Generation (RAG). Our empirical results indicate that the proposed polybasic framework maintains output fidelity while delivering speedup ratios from $3\times$ to over $4\times$ across a range of widely used LLMs (e.g., Vicuna-7B, LLaMA2-Chat 7B, LLaMA3-8B).

The main contributions of this work are summarized as follows:
\begin{itemize}
\item We develop a formal theoretical framework for polybasic speculative decoding, identifying the system-level dependencies between model forward-pass cost, acceptance lengths, and stable acceleration performance.
\item We prove a fundamental theorem that provides a rigorous expression for optimal inference time in multi-model speculative decoding, highlighting conditions under which additional models improve speedups.
\item We show that speculative sampling significantly reduces variance in token acceptance lengths for multi-model settings, increasing stability and improving inference throughput.
\item Our empirical investigation demonstrates the effectiveness of the proposed polybasic approach, achieving notable speedups (up to $4.43\times$) on widely used LLMs across various tasks, while preserving the target model’s output distribution.
\end{itemize}

\section{Related Work}
The concept of speculative decoding originated from blockwise parallel decoding~\citep{stern2018blockwise}, showcasing the viability of partially parallel language generation. More recently, research on speculative decoding~\citep{stewart2024n,zafrir2024fastdraft} has coalesced around two dimensions: \textit{drafting} strategies for token prediction and \textit{verification} mechanisms for ensuring correctness.

\paragraph{Drafting Strategies.}
Drafting approaches typically follow either \textit{independent} or \textit{self-drafting} protocols. \emph{Independent drafting} involves utilizing smaller or more efficient models to propose candidate tokens, later verified by a larger model. Methods range from training specialized drafters~\citep{leviathan2023fast,xia2023speculative,chen2023accelerating,kim2024speculative,metel2024draft} to zero-shot usage of pre-existing models~\citep{spector2023staged}. \emph{Self-drafting} employs the same model at intermediate stages, as in blockwise decoding~\citep{stern2018blockwise,xiao2024parallelspec}, early exiting~\citep{yang2023ppd}, or mask-predict~\citep{zhao2024lookahead}, aiming to amortize computation within the same architecture.

\paragraph{Verification Mechanisms.}
Verification primarily ensures that proposed tokens maintain consistency with the target distribution. Greedy verification~\citep{kim2024speculative,xia2023speculative,agrawal2024adaedl} is conceptually straightforward but may hinder speedups for certain tasks. Speculative sampling~\citep{leviathan2023fast} introduces a probabilistic acceptance rule that adaptively filters tokens while retaining a high acceptance length. Token-tree-based verification~\citep{miao2024specinfer,spector2023staged,lu2024improving,gao2024falcon} provides hierarchical checks, which can be beneficial for highly parallel architectures.

\paragraph{Recent Advances and Limitations.}
Recent work on cascade or multi-level drafting~\citep{chen2023cascade,sun2024triforce} has partially moved beyond the dualistic draft-target scheme. 
TRIFORCE~\citep{sun2024triforce} tackles long sequence generation by introducing a two-level hierarchy with retrieval-based drafting and partial KV cache as an intermediate layer, achieving up to $2.31 \times$ speedup for Llama2-7B-128K. CS Drafting~\citep{chen2023cascade}, on the other hand, employs vertical and horizontal cascades to eliminate neural autoregressive generation and optimize time allocation in drafting, resulting in up to $81\%$ additional speedup over standard speculative decoding. While TRIFORCE focuses on memory efficiency in long-context scenarios through KV cache optimization, CS Drafting targets general inference optimization through cascade structures and statistical drafting.

However, these approaches usually rely on empirical heuristics without a unified theoretical framework to guide model selection, acceptance-length control, and stability analysis. Our work explicitly addresses these gaps by introducing a comprehensive theoretical treatment of \textit{polybasic} speculative decoding and validating the resulting system design empirically.

\section{Polybasic Speculative Decoding Framework}

\begin{figure*}[h]
    \centering
    \includegraphics[width=1.0\linewidth, height=0.6\textheight, keepaspectratio]{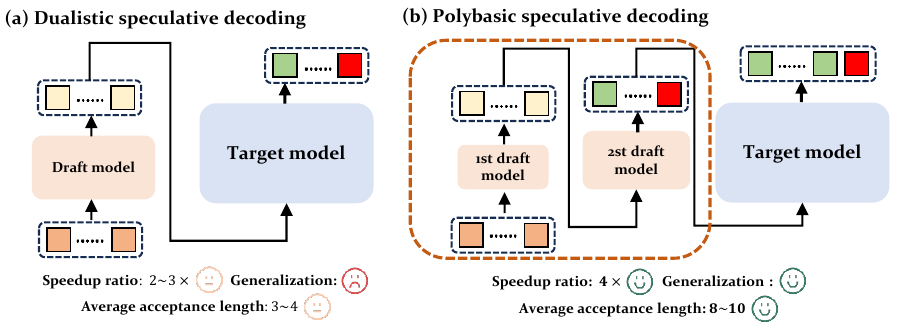}
    \caption{Comparison of speculative decoding frameworks. (a) Traditional dualistic approach with a single draft model. (b) Our polybasic framework with multiple draft models achieves superior performance (4× speedup and 8-10 tokens acceptance length) while maintaining good generalization ability. The framework demonstrates significant improvements over the dualistic baseline.}
    
\end{figure*}

Although speculative decoding has been demonstrated as an effective technique for single-model verification, its acceleration potential remains capped by the inherent draft-target capacity gap in dualistic paradigms. We propose a \emph{polybasic} speculative decoding framework, which systematically employs multiple models to increase parallelism and acceptance length while preserving fidelity to the final target distribution. Below, we detail the core problem setting, theoretical foundations, and practical instantiations.

\subsection{Problem Formulation}
Let us consider a chain of models $\mathcal{M} = \{M_1, \dots, M_n\}$, where $M_1$ is the final target model we wish to replicate in distribution, and $M_2,\ldots,M_n$ act as drafters at progressively lower capacity (higher index indicates a smaller or faster model). Let $\mathcal{V}$ be the vocabulary, and $p_i(x \mid x_{\le t})=M_i(x_{\le t})$ the distribution over $\mathcal{V}$ given context $x_{\le t}$. 

At each decoding step $t$, $M_n$ drafts a block of $K$ tokens, verified in ascending order by $M_{n-1}, M_{n-2}, \dots, M_1$. Tokens are accepted if they do not exceed a certain mismatch criterion, reflecting \emph{speculative sampling}, greedy matching, or another verification rule. Define:
\begin{equation}
  L_{i} \;=\; \mathbb{E}\big[\text{(\# of consecutive tokens accepted by }M_i \text{)}\big],
\end{equation}
i.e., the expected block length accepted when verifying with model $M_i$. Denoting by $F_i$ the number of forward passes that $M_i$ must perform, our goal is to minimize the total inference time
\begin{equation}\label{eq:T-main}
T=\sum_{i=1}^n F_i \cdot T_i,
\end{equation}
where $T_i$ is the cost of a single forward pass for model $M_i$.

\subsection{Theoretical Foundations}
\label{sec3.2}
We establish fundamental properties of multi-model (polybasic) speculative decoding that govern how additional models impact computational cost and acceptance lengths. Our analysis focuses on two main aspects: \emph{(i) optimal inference time} and \emph{(ii) stability of acceptance lengths}.

\paragraph{Optimal Inference Time.} 
In a conventional dualistic system with one draft model ($M_2$) and a single target ($M_1$), the total inference time is approximately equal to $\frac{N}{L_1}\,T_1 + \beta\,\frac{N}{L_1}\,T_2$, where $N$ is the sequence length and $\beta$ is a system-dependent scaling factor reflecting the final draft model's capability. In a polybasic setting with $n>2$ models, additional drafting layers can bring more tokens per verification cycle if acceptance lengths between intermediate pairs are high. However, each additional model also introduces its own forward-pass costs. Formally, we have:

\begin{lemma}[Optimal Inference Time]\label{lemma:inference_time_enhanced}
For an $n$-model polybasic system generating $N$ tokens, the total inference time $T$ can be expressed as:
\begin{equation}\label{eq:T-decompose}
T \;=\; \sum_{i=1}^{n-1} \frac{N}{L_i} \cdot T_i \;+\; \beta\cdot\frac{N}{L_{n-1}} T_n,
\end{equation}
where $L_i$ is the expected acceptance length for verification by $M_i$, and $\beta$ is a system-dependent scaling factor reflecting the final draft model's capability.
\end{lemma}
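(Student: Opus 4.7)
The plan is to start from the primitive accounting in \eqref{eq:T-main}, namely $T=\sum_{i=1}^n F_i\,T_i$, and to determine each $F_i$ by a layer-by-layer count of forward passes through the cascade. The target is to show $F_i=N/L_i$ for $1\le i\le n-1$ and $F_n=\beta\,N/L_{n-1}$, after which substitution into \eqref{eq:T-main} gives \eqref{eq:T-decompose} directly.

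First, I would compute $F_1=N/L_1$ from first principles: the final sequence has length $N$, and $M_1$ accepts on average $L_1$ tokens per forward pass, so in expectation the target must be invoked $N/L_1$ times. Second, I would argue inductively that $F_i=N/L_i$ for intermediate $2\le i\le n-1$. The key observation is that in steady state the batch handed to $M_{i-1}$ for a single verification has expected length $L_{i-1}$, since this is precisely the block accepted by $M_{i-1}$ and pushed upward to $M_{i-2}$. To supply such a batch, $M_i$ must itself perform on average $L_{i-1}/L_i$ forward passes, each contributing $L_i$ accepted tokens to the up-going stream. Composing through the chain yields $F_i=(L_{i-1}/L_i)\,F_{i-1}$, and telescoping against the base case gives $F_i=N/L_i$. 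Third, $M_n$ is handled separately because it is the bottom drafter and performs no verification itself: it generates tokens autoregressively within draft windows whose effective length is absorbed into the system constant $\beta$. Thus for every forward pass of $M_{n-1}$, the drafter $M_n$ costs $\beta$ forward passes, so $F_n=\beta\,F_{n-1}=\beta\,N/L_{n-1}$.

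Substituting these counts into \eqref{eq:T-main} produces
\begin{equation*}
T \;=\; \sum_{i=1}^{n-1}\frac{N}{L_i}\,T_i \;+\; \beta\,\frac{N}{L_{n-1}}\,T_n,
\end{equation*}
which is exactly \eqref{eq:T-decompose}. As a sanity check, for $n=2$ this collapses to $\tfrac{N}{L_1}T_1+\beta\tfrac{N}{L_1}T_2$, recovering the dualistic expression quoted immediately before the lemma.

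The main obstacle is the steady-state matching step, i.e.\ the claim that the batch presented to $M_{i-1}$ has expected length $L_{i-1}$. This is a well-posedness/renewal statement asserting that each verifier is supplied with just enough tokens so that the expected accepted count equals the expected batch size (no starvation, no slack on average). Rigorously justifying it requires either an explicit assumption that the drafting window at each level is adaptively tuned to the downstream acceptance length, or a Wald-type identity applied across the cascade. A secondary subtlety is that $N/L_i$ should be read as $N/\mathbb{E}[L_i]$ rather than $\mathbb{E}[N/L_i]$; this is the standard expected-value approximation in the speculative-decoding literature, and once adopted the remaining steps are a routine chain of linearity-of-expectation computations.
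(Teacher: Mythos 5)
Your proposal is correct and follows essentially the same route as the paper, which only gives a sketch: segment the $N$ generated tokens into accepted blocks per adjacency $(M_i,M_{i+1})$, count $F_i = N/L_i$ forward passes for each verifier and $\beta N/L_{n-1}$ for the bottom drafter, and substitute into $T=\sum_i F_i T_i$. Your version is in fact more detailed than the paper's (the telescoping identity $F_i=(L_{i-1}/L_i)F_{i-1}$, the $n=2$ sanity check, and the explicit flagging of the steady-state/Wald-type approximation behind $N/L_i$), all of which the paper leaves implicit.
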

\begin{proof}[Sketch of Proof]
We segment the total generation length $N$ into accepted blocks validated by pairs $(M_i,M_{i+1})$. Each model $M_i$ must run as many forward passes as needed to accept $N$ tokens in total. A more detailed version of the proof incorporates the block acceptance process for each adjacency $(M_i,M_{i+1})$, culminating in the time decomposition of \cref{eq:T-decompose}.
\end{proof}

\paragraph{Model Selection Criterion.} 
The next question is whether introducing a new model $M_{\mathrm{new}}$ between $M_i$ and $M_{i+1}$ improves $T$. The improvement depends on whether the reduced cost from higher acceptance length outweighs the additional forward-pass overhead. Formally:

\begin{theorem}[Model Insertion Efficiency]\label{thm:model_insertion_enhanced}
Adding $M_{\mathrm{new}}$ between $M_i$ and $M_{i+1}$ decreases total inference time if and only if it achieves a sufficiently large increase in acceptance lengths, balanced against its forward-pass cost $T_{\mathrm{new}}$. Concretely, if $L_{\mathrm{new}}$ is the acceptance length when verifying tokens from $M_{\mathrm{new}}$ against $M_i$, and $L_{i+1}'$ is the acceptance length from $M_{i+1}$'s perspective, then improvement occurs if:
\begin{align*}
&\frac{T_{\mathrm{new}}}{T_i} < L_{\mathrm{new}} \left(\frac{1}{L_i} - \frac{1}{L_{i-new}}\right) 
  \quad\text{or} \\
& \frac{T_{\mathrm{new}}}{T_{i+1}} < \beta \left(\frac{L_{{\mathrm{new}}-(i+1)}}{L_i} - 1\right)
\end{align*}

\end{theorem}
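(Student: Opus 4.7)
The plan is to derive the condition by directly comparing the inference-time expression from \cref{lemma:inference_time_enhanced} before and after inserting $M_{\mathrm{new}}$. First I would write
\[
T \;=\; \sum_{j=1}^{n-1} \frac{N}{L_j}\,T_j \;+\; \beta\cdot\frac{N}{L_{n-1}}\,T_n,
\]
for the original $n$-model chain, and then reapply the lemma to the extended $(n{+}1)$-model chain $M_1,\dots,M_i,M_{\mathrm{new}},M_{i+1},\dots,M_n$ to obtain a new total time $T'$. The improvement criterion is simply $T' < T$, which I will rearrange into the stated ratios involving $T_{\mathrm{new}}$.

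Next I would identify exactly which summands change, since most of the original terms are unaffected by a local insertion. Two effects appear simultaneously: (i) the term for $M_i$ now uses the pair $(M_i,M_{\mathrm{new}})$, so its denominator switches from $L_i$ to the acceptance length $L_{i\text{-new}}$ that $M_i$ achieves against drafts from $M_{\mathrm{new}}$; and (ii) a brand-new term $\frac{N}{L_{\mathrm{new}\text{-}(i+1)}}\,T_{\mathrm{new}}$ is introduced for $M_{\mathrm{new}}$ verifying $M_{i+1}$. From this, $T-T'$ splits into a gain from the improved pairing of $M_i$ and a cost from the extra forward passes of $M_{\mathrm{new}}$; requiring the gain to exceed the cost, dividing by $T_i$, and using $L_{\mathrm{new}}$ to relate block counts across consecutive layers yields the first inequality.

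The second inequality arises by treating the case in which the insertion sits at the bottom of the cascade, so that the $\beta$-scaled terminal term in \cref{eq:T-decompose} is the one that changes. Repeating the same before/after subtraction, but now with $\beta \cdot N/L_{n-1}\,T_n$ replaced by $\beta \cdot N/L_{\mathrm{new}\text{-}(i+1)}\,T_n$, produces a gain that is multiplicative in $L_{\mathrm{new}\text{-}(i+1)}/L_i$; balancing this gain against the new forward-pass cost $\frac{N}{L_{\mathrm{new}\text{-}(i+1)}}\,T_{\mathrm{new}}$ and dividing through by $T_{i+1}$ gives the second inequality. The two inequalities are joined by ``or'' because satisfying either one is already sufficient for $T' < T$; the converse direction follows because if neither holds, the cost term dominates the gain in both the interior and the terminal decomposition.

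The main obstacle is the bookkeeping of acceptance lengths under insertion: $L_i$ in the original chain refers to the $(M_i,M_{i+1})$ pair, but after insertion that slot is occupied by a different pair $(M_i,M_{\mathrm{new}})$ with a generally different acceptance distribution, and there is no a priori monotone relation between $L_{i\text{-new}}$, $L_i$, and $L_{\mathrm{new}\text{-}(i+1)}$. I would therefore have to argue carefully that the block-acceptance derivation underlying \cref{lemma:inference_time_enhanced} applies verbatim to each pair, treating the $L$ values as empirical constants attached to adjacent pairs rather than properties of individual models. A second subtlety is the $\beta$ factor, which is a scaling tied to the \emph{last} drafter in the chain; ensuring that $\beta$ is held fixed across the comparison (rather than silently re-scaled when $M_{\mathrm{new}}$ becomes the new terminal drafter) is what legitimately separates the two cases and produces exactly two, rather than one merged, sufficient conditions.
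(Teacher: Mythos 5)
Your overall route is the paper's route: write the \cref{lemma:inference_time_enhanced} decomposition before and after insertion, observe that only the $M_i$ term, the terminal $\beta$-term, and one brand-new $T_{\mathrm{new}}$ term change, and turn $T'<T$ into ratio conditions. The paper does this concretely for the three-model case ($T = \frac{N}{L_1}T_1 + \beta\frac{N}{L_1}T_2$ versus $T=\frac{N}{L_1'}T_1+\frac{N}{L_2'}T_2'+\beta\frac{N}{L_2'}T_3'$) and then handles a general insertion point by collapsing everything above and below the insertion into two composite models; your direct bookkeeping of which summands change is an acceptable substitute for that reduction. One framing difference worth noting: in the paper the two inequalities do \emph{not} come from two different insertion positions (interior versus bottom of the cascade). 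They come from one and the same difference expression, which has three terms --- a negative gain term in $T_i$, the positive new cost term $\frac{N}{L_{\mathrm{new}}}T_{\mathrm{new}}$, and a negative gain term in the $\beta$-scaled tail --- grouped in two overlapping ways, each group sharing the positive middle term. Either grouping being negative suffices because the leftover third term is negative by the assumption $L_{i\text{-new}}>L_{\mathrm{new}}>L_i$.

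The genuine gap is your closing claim that ``the converse direction follows because if neither holds, the cost term dominates the gain in both decompositions.'' That is false, and the paper does not claim it in its proof: with the three terms $A<0$, $B>0$, $C<0$ above, one can have $A+B\ge 0$ and $B+C\ge 0$ while $A+B+C<0$ (e.g.\ $A=C=-1$, $B=1.5$), so failing both conditions does not imply $T'\ge T$. The stated inequalities are a disjunction of \emph{sufficient} conditions only --- the paper itself says as much later (``our sufficient (though not necessary) condition for model insertion efficiency''), and the theorem's body is worded ``improvement occurs if,'' not ``if and only if.'' Drop the converse, or state it only for the exact condition $A+B+C<0$, and your argument matches the paper's.
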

\begin{proof}
First, we prove the case for three models:

For $i=2$:
\begin{equation}
\label{eq:2}
    T = \frac{N}{L_1} \cdot T_1 + \beta \cdot \frac{N}{L_1} \cdot T_2
\end{equation}

For $i=3$:
\begin{equation}
\label{eq:3}
    T = \frac{N}{L_1'} \cdot T_1 + \frac{N}{L_2'} \cdot T_2' + \beta \cdot \frac{N}{L_2' } \cdot T_3'
\end{equation}
where $T_i$ is the inference time of the $i$-th model, $\alpha$ is considered to be equal in both equations, and $T_2 = T_3'$, $L_1'>L_2'>L_1$.

We can calculate the difference between Equation \ref{eq:2} and Equation \ref{eq:3}:
\begin{align*}
& N \cdot \left(\frac{1}{L_1'} - \frac{1}{L_1}\right) \cdot T_1 + \frac{N}{L_2'} \cdot T_2' \\
& + \beta \cdot N \cdot \left(\frac{1}{L_2'} - \frac{1}{L_1}\right) \cdot T_2 < 0
\end{align*}

The expression is less than 0 if either of the following conditions is met:

Condition 1: Sum of the first two terms is less than 0
\begin{align*}
& N \cdot \left(\frac{1}{L_1'} - \frac{1}{L_1}\right) \cdot T_1 + \frac{N}{L_2'} \cdot T_2' < 0 \\
\Leftrightarrow & \frac{T_2'}{T_1} < L_2' \cdot \left(\frac{1}{L_1} - \frac{1}{L_1'}\right)
\end{align*}

OR

Condition 2: Sum of the last two terms is less than 0
\begin{align*}
& \frac{N}{L_2'} \cdot T_2' + \beta \cdot N \cdot \left(\frac{1}{L_2'} - \frac{1}{L_1}\right) \cdot T_2 < 0 \\
\Leftrightarrow & \frac{T_2'}{T_2} < \beta \cdot \left(\frac{L_2'}{L_1} - 1\right)
\end{align*}

This result generalizes to inserting a new model at any position in a polybasic system. When inserting model $M_{new}$ between $M_i$ and $M_{i+1}$, we can treat all models before the insertion point ($M_1$ through $M_i$) as a single composite model, and all models after the insertion point ($M_{i+1}$ through $M_k$) as another composite model. This reduces the general case to the three-model case proven above, where the composite model before insertion corresponds to $M_1$, the new model corresponds to $M_2'$, and the composite model after insertion corresponds to $M_2$.

Therefore, the same conditions for efficiency improvement apply at any insertion point in the model sequence, subject to the specified constraints on acceptance lengths.
\end{proof}

\paragraph{Stability Analysis.}
Beyond achieving higher acceptance lengths, stability in acceptance is crucial for consistent speedups. We analyze speculative sampling with probability $p_i = 1-\alpha$ to accept a token from $M_{i+1}$ if it is likely under $M_i$’s distribution. Let $\sigma_i^2$ be the variance of acceptance lengths. As shown below, for multi-model chaining, acceptance length variance grows with smaller $p_i$, implying that high acceptance probability supports stable performance:

\begin{theorem}[Sampling Stability]\label{thm:stability}
In the model chain using speculative sampling with acceptance probability $p_i = 1-\alpha$, the variance in acceptance length satisfies:
\begin{equation*}
\sigma^2 \;=\; \frac{\alpha\,\bigl[1-(n^2-1)\alpha^n\bigr] \;-\; (n^2-1)\alpha^{n+1}}{(1-\alpha)^2}.
\end{equation*}
\end{theorem}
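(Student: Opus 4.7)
The plan is to treat the $n$-model chain as inducing a truncated geometric-type distribution on the acceptance length $L$. The speculative sampling correctness guarantee (standard Leviathan-style argument) ensures that each drafted token is accepted independently with probability $p = 1-\alpha$, so walking through the chain amounts to an i.i.d.\ sequence of Bernoulli trials that terminates either at the first rejection or at the block boundary controlled by $n$. Consequently $L$ has support $\{0,1,\dots,n\}$, with $P(L=k)$ taking the usual geometric form for interior $k$ and a single boundary mass at $k=n$ collecting the probability of consuming the entire block without rejection.

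First I would write this PMF out explicitly and evaluate the first and second moments using the truncated-geometric identities
\begin{equation*}
\sum_{k=0}^{m} k\,x^{k} \;=\; \frac{x - (m+1)x^{m+1} + m\,x^{m+2}}{(1-x)^{2}},
\end{equation*}
together with the analogous closed form for $\sum_{k=0}^{m} k^{2} x^{k}$, both of which are obtained by differentiating $\sum_{k=0}^{m} x^{k} = (1-x^{m+1})/(1-x)$ once and twice with respect to $x$. This produces $E[L]$ and $E[L^{2}]$ as rational functions in $\alpha$ whose denominators are powers of $1-\alpha$ and whose numerators collect terms up to degree $\alpha^{n+2}$.

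Next I would assemble $\sigma^{2} = E[L^{2}] - (E[L])^{2}$, clear a common denominator of $(1-\alpha)^{2}$, and collect like powers of $\alpha$. My expectation is that the leading term reduces to $\alpha$ after cancellation, the mid-degree terms consolidate around $\alpha^{n}$ and $\alpha^{n+1}$, and the very high-order terms (up to $\alpha^{2n+2}$) cancel entirely between $E[L^{2}]$ and $(E[L])^{2}$. The factorization $n^{2} - 1 = (n-1)(n+1)$ is the signal that guides the grouping: the $n^{2}\,P(L=n)$ boundary contribution to $E[L^{2}]$ differs from the analogous term inside $(E[L])^{2}$ by exactly this combinatorial factor.

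The main obstacle is precisely this final algebraic consolidation. The individual expressions for $E[L^{2}]$ and $(E[L])^{2}$ each contain five or six inhomogeneous terms with coefficients linear or quadratic in $n$, and tracking how the pieces telescope requires careful bookkeeping rather than any deep probabilistic insight. Once the $n^{2}$ contribution from the boundary mass is matched against the cross term in $(E[L])^{2}$, the surviving numerator collapses to $\alpha\bigl[1-(n^{2}-1)\alpha^{n}\bigr] - (n^{2}-1)\alpha^{n+1}$ over $(1-\alpha)^{2}$, which is the claimed expression. As a sanity check I would verify the limit $n\to\infty$ recovers the standard unbounded-geometric variance $\alpha/(1-\alpha)^{2}$, confirming that the multi-model chain reduces to the familiar dualistic case in the appropriate limit.
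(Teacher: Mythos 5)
Your overall route---model the acceptance length as a truncated geometric variable, obtain $E[L]$ and $E[L^2]$ from the differentiated geometric-sum identities, and form $\sigma^2=E[L^2]-(E[L])^2$---is exactly the route the paper takes, so on strategy you and the authors agree. The genuine gap is the one step you defer to ``careful bookkeeping'': the assertion that the numerator collapses to $\alpha\bigl[1-(n^2-1)\alpha^n\bigr]-(n^2-1)\alpha^{n+1}$. If you actually carry out the algebra for the distribution you set up (per-step continuation probability matching the paper's mean $E[N]=(1-\alpha^n)/(1-\alpha)$, truncation at $n$), you get
\begin{equation*}
E[N^2]=\frac{1+\alpha-(2n+1)\alpha^{n}+(2n-1)\alpha^{n+1}}{(1-\alpha)^2},
\qquad
\sigma^2=\frac{\alpha-(2n-1)\alpha^{n}(1-\alpha)-\alpha^{2n}}{(1-\alpha)^2},
\end{equation*}
not the stated expression. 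In particular, your claim that all high-order terms cancel between $E[L^2]$ and $(E[L])^2$ is false: the $-\alpha^{2n}$ term from $(E[L])^2$ survives, and no $(n^2-1)$ coefficient ever appears.

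The stated formula cannot be reached by any correct derivation from this setup because it fails elementary sanity checks: at $n=1$ it gives $\alpha/(1-\alpha)^2\neq 0$ even though the length is deterministic, and at $n=2$, $\alpha=\tfrac12$ it evaluates to $-1$, a negative variance, whereas the true value is $\tfrac14$. Your proposed $n\to\infty$ check is too coarse to detect this, since the erroneous $(n^2-1)\alpha^{n}$ terms vanish in that limit anyway; checking a small finite $n$ would have exposed the problem immediately. (For what it is worth, the paper's own proof shares the defect---its displayed $E[N^2]$ already fails at $n=1$---so the discrepancy lies in the target statement itself; but as a proof of the theorem as stated, your final step would not go through.)
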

\begin{proof}

Let $p = 1 - \alpha$ be the probability of accepting a token. For a truncated geometric distribution of maximum $n$ trials, define:
\begin{equation*}
S = \sum_{k=1}^{n-1} k \cdot (1-p)^{k-1}.
\end{equation*}
Using standard manipulation (method of differences), one can show:
\begin{equation*}
S = \frac{1 - (1-p)^{n-1} - n(1-p)^{n-1} + (1-p)^n}{p^2}.
\end{equation*}
Hence, the expectation of the acceptance length, allowing up to $n$ tokens, is
\begin{equation*}
E[N] = \frac{1 - (1-p)^n}{p}.
\end{equation*}

We similarly compute
\begin{equation*}
E[N^2] = \sum_{k=1}^{n-1} k^2 \cdot p \cdot (1-p)^{k-1} + n^2 \cdot (1-p)^{n-1}.
\end{equation*}
After careful algebra (omitted for brevity), one obtains:
\begin{align*}
E[N^2] & = \\
&\frac{1-(1-p)^n(n^2+2n-1)+2(1-p)^{n+1}(n-1)}{p^2}.
\end{align*}
Thus, 
\begin{equation*}
\mathrm{Var}(N) = E[N^2] - (E[N])^2,
\end{equation*}
leading to the formula stated in Theorem~\ref{thm:stability}.
\end{proof}

Collectively, these results establish a principled foundation for polybasic speculative decoding. Given model inference times $T_i$ and acceptance probabilities, one can estimate the optimal system layout via \cref{eq:T-decompose}, gauge whether a new model confers net benefit (\cref{thm:model_insertion_enhanced}), and use speculative sampling to ensure stable acceptance lengths (\cref{thm:stability}).

\subsection{Three-Model System Design}
To illustrate practical deployment, we describe a three-model system ($M_1$: target, $M_2$: intermediate, $M_3$: lightweight) that exemplifies the design choices guided by our theory.

\paragraph{Architecture.}
Our reference system includes:
\begin{itemize}
\item \textbf{$M_1$ (Target):} A high-capacity model such as Vicuna-7B or LLaMA2-Chat 7B.
\item \textbf{$M_2$ (Intermediate):} A quantized 4-bit version of $M_1$ or a comparable mid-size model to bridge the capacity gap.
\item \textbf{$M_3$ (Draft):} A lightweight, fast model (e.g., EAGLE2\citep{li2024eagle2}) for initial token proposals.
\end{itemize}
By Theorem~\ref{thm:model_insertion_enhanced}, $M_2$ should be inserted if it raises the acceptance length enough to offset its own forward-pass cost.

\paragraph{Staged Verification.}
Tokens first pass from $M_3$ to $M_2$, whose verification is relatively fast. Accepted tokens are then periodically verified by $M_1$. This two-stage verification acts as a filter, rapidly discarding problematic tokens at the cheaper $M_2$ stage. The threshold for passing tokens to $M_1$ is set to accumulate a small block (e.g., $\mu$ tokens) to amortize $M_1$’s forward-pass overhead. This setup capitalizes on the fact that $M_3$ can generate numerous tentative tokens quickly, while $M_1$ only checks consolidated blocks of already moderately validated tokens.

\paragraph{Algorithm.}
\cref{alg:mutil-model} details a generic procedure for polybasic speculative decoding with three models. The system accumulates tokens verified by $M_2$ until a threshold, then triggers verification by $M_1$. Upon acceptance or partial acceptance, it appends tokens to the growing output sequence and advances $t$. Simple or more sophisticated error handling (e.g., partial rollback) can be adopted if $M_1$ rejects tokens.

\begin{algorithm}[tb]
   \caption{Polybasic Speculative Decoding (Three Models)}
   \label{alg:mutil-model}
\begin{algorithmic}[1]
   \STATE {\bfseries Input:} Target model $M_1$, intermediate model $M_2$, draft model $M_3$
   \STATE {\bfseries Input:} Context $x_{\leq t}$, total length $N$
   \STATE {\bfseries Input:} Draft length $K$, threshold $\mu$
   \STATE {\bfseries Initialize:} $t \gets |x_{\leq t}|$, $\text{accepted} \gets \emptyset$, $\text{cnt} \gets 0$
   \WHILE{$t < N$}
       \STATE // Draft and verify with $M_3$ and $M_2$
       \STATE $\widetilde{x}_{1:K} \gets M_3(x_{\leq t})$  \quad // Draft
       \STATE $p_{1:K} \gets M_2(x_{\leq t}, \widetilde{x}_{1:K})$  \quad // Verify
       \FOR{$i=1$ {\bfseries to} $K$}
           \IF{$\textsc{Verify}(\widetilde{x}_i, p_i)$}
               \STATE $\text{accepted}.\text{append}(\widetilde{x}_i)$
               \STATE $\text{cnt} \gets \text{cnt} + 1$
           \ELSE
               \STATE \textbf{break}
           \ENDIF
       \ENDFOR
       \STATE // Check if threshold reached for $M_1$ verification
       \IF{$\text{cnt} \geq \mu$}
           \STATE $v \gets M_1(x_{\leq t}, \text{accepted})$  // Verify
           \IF{$\textsc{VerifyBlock}(\text{accepted}, v)$}
               \STATE $x_{t+1 : t+\text{cnt}} \gets \text{accepted}$
               \STATE $t \gets t + \text{cnt}$
           \ELSE
               \STATE $x_{t+1} \gets \text{SampleOne}(v_1)$  // fallback acceptance
               \STATE $t \gets t + 1$
           \ENDIF
           \STATE $\text{accepted} \gets \emptyset$
           \STATE $\text{cnt} \gets 0$
       \ENDIF
   \ENDWHILE
\end{algorithmic}
\end{algorithm}

Such a staged design is representative rather than exhaustive. Model scaling, verification strategies, and drafting lengths can be adapted to different resource constraints or performance targets. Our theoretical framework offers explicit performance bounds, making the design space more transparent.

\subsection{Generalization to Self-Drafting Methods}
Our polybasic speculative decoding framework provides a general theoretical foundation that naturally extends to self-drafting approaches, demonstrating the broad applicability of our theoretical principles. This generalization reveals new opportunities for designing efficient speculative decoding systems using model-internal components.

In self-drafting methods like FFN heads approaches ~\citep{cai2024medusa}~\citep{ankner2024hydra}~\citep{kim2024exploring} and early exiting~\citep{elhoushi2024layer} techniques, multiple prediction sources are derived from the same model architecture. These can be viewed as the fundamental draft models in a polybasic system, with the original model serving as the target. For instance, in an FFN heads approach, each head functions as a base draft model capable of generating tokens in parallel, forming the lowest tier in our polybasic hierarchy. Similarly, early exit points at different layers can be treated as draft models with varying capabilities.

The optimization principles established in \cref{sec3.2} remain applicable in this context, though they require careful algorithmic design to account for the shared computational paths in self-drafting approaches. The relationship between forward pass frequency and acceptance length provides guidance for optimal configuration of these systems, while our stability analysis (\cref{thm:stability}) informs the design of verification strategies.

This generalization demonstrates that our polybasic framework provides a unified theoretical foundation for speculative decoding, encompassing both independent draft models and self-drafting approaches. This suggests promising directions for developing new algorithmic techniques that fully exploit the parallel prediction capabilities inherent in these methods while maintaining the theoretical guarantees of our framework.
\section{Experiments}

We conduct comprehensive evaluations to validate our theoretical claims and to benchmark the proposed polybasic system against traditional dualistic strategies. Our experiments span various LLMs, tasks, and hyperparameter settings.

\subsection{Setup and Metrics}
\paragraph{Models and Tasks.}
 We conducted experiments on Vicuna-7B, LLaMA2-chat-7B, and LLaMA3-7B-Instruct. We evaluated our multi-model speculative system in SpecBench\citep{xia2024unlocking}, across multiple tasks including multi-turn conversation, translation, summarization, question answering, mathematical reasoning, and retrieval-augmented generation, employing the MT-bench \citep{zheng2023judging}, WMT14 DE-EN, CNN/Daily Mail \citep{nallapati2016abstractive}, Natural Questions \citep{kwiatkowski2019natural}, GSM8K \citep{cobbe2021training}, and DPR \cite{karpukhin2020dense}. Speculative sampling \citep{leviathan2023fast} conducted experiments with a batch size of 1, similarly, the majority of our experiments also adopted this setting.

\paragraph{Performance Metrics.}
Following previous speculative decoding studies, we focus on two metrics.
\textbf{Walltime speedup ratio $c$}: ratio of actual decoding time in our system vs.\ standard autoregressive decoding.
\textbf{Average acceptance length $\mu$}: mean number of consecutively accepted tokens per forward pass by the final (largest) model.

\paragraph{Quantization and Training Details.}
For the intermediate model, we adopt 4-bit quantization~\citep{ma2024affinequant} with a group size of 128, balancing reduced inference cost against quality. Draft models are built following EAGLE2, trained on ShareGPT data. Our experiments run on NVIDIA A800 80G GPUs.

\subsection{Theoretical Validation}
\label{sec:theoretical_validation}

To empirically validate Theorem~\ref{thm:model_insertion_enhanced}, we conducted two targeted experiments evaluating the impact of inserting additional draft models into a polybasic system. We measured \(T_{\text{new}}\) , \(L_{\text{new}}\), and the resultant speedup ratio. Results are summarized in Table~\ref{tab:theorem_validation}.

\textbf{Case 1: Non-Compliant Insertion}  
We inserted a lightweight Vicuna-1B model between Vicuna-7B (target) and EAGLE2 (baseline drafter). Here, \(T_{\text{new}}/T_i = 0.80\), while the acceptance-length improvement factor \(L_{\text{new}} \cdot \left(1/L_i - 1/L_{i\text{-new}}\right) = 0.117\). Since \(0.80 > 0.117\), Theorem~\ref{thm:model_insertion_enhanced} predicts a performance degradation. Empirically, the speedup ratio dropped from \(2.61\times\) to \(1.08\times\), confirming the theoretical prediction. This highlights the necessity of balancing model capacity and computational overhead when expanding the polybasic hierarchy.

\textbf{Case 2: Compliant Insertion}  
We inserted a quantized Vicuna-7B (W4A16) model between the original Vicuna-7B and EAGLE2. Here, \(T_{\text{new}}/T_i = 0.318\) and \(L_{\text{new}} \cdot \left(1/L_i - 1/L_{i\text{-new}}\right) = 0.330\). Since \(0.318 < 0.330\), Theorem~\ref{thm:model_insertion_enhanced} predicts a speedup improvement. Experimentally, the system achieved a \(3.48\times\) speedup, up from \(2.61\times\). This demonstrates the theorem’s utility in guiding effective model selection.

\textbf{Case 3: Generalization}  
To substantiate the universal theoretical guidance of Theorem~\ref{thm:model_insertion_enhanced} across diverse cascaded speculative sampling methodologies, we reproduced Cascade Speculative Drafting \citep{chen2023cascade} and conducted rigorous evaluations spanning multiple model scales including FLAN-T5-XXL, Base, and Small variants.
We inserted FLAN-T5-base between FLAN-T5-XXL and FLAN-T5-small. The configuration yields \(T_{\text{new}}/T_i = 0.403\) with acceptance metric \(L_{\text{new}} \cdot \left(1/L_i - 1/L_{i\text{-new}}\right) = 0.461\), satisfying the acceleration criterion \(0.403 < 0.461\) as shown in Table~\ref{tab:model_insertion_validation}. The system exhibits statistically significant speedup improvement from \(3.19\times\) to \(3.88\times\).

\begin{table*}[ht]
\label{tab:theorem_validation}
\centering
\caption{Theoretical Validation via Model Insertion}
\label{tab:model_insertion_validation}
\begin{tabular}{lccccccc}
\toprule
\textbf{Case} & \(T_i\) (ms) & \(L_{i\text{-new}}\) & \(T_{\text{new}}\) (ms) & \(L_{\text{new}}\) & \(T_{i+1}\) (ms) & \(L_i\) & \textbf{Speedup} \\
\midrule
Non-compliant & 22 & 3.83 & 17.61 & 3.77 & 4 & 4.34 & \(2.61\times \to 1.08\times\) \\
Compliant & 22 & 6.26 & 7.00 & 4.67 & 4 & 4.34 & \(2.61\times \to 3.48\times\) \\
CS Drafting & 47.52 & 3.50 & 19.16 & 3.02 & 12.42 & 2.28 & 3.19×→3.88× \\
\bottomrule
\end{tabular}
\vspace{0.2cm}
\small

\end{table*}

These experiments directly corroborate Theorem~\ref{thm:model_insertion_enhanced}, showing that the theoretical conditions on \(T_{\text{new}}\) and \(L_{\text{new}}\) are necessary for improving system performance. The results emphasize the framework’s ability to rigorously guide model selection and hierarchy design, moving beyond heuristic-driven approaches.

\subsection{Effectiveness}
\begin{figure*}[ht]
    \centering
    \includegraphics[width=1.0\linewidth]{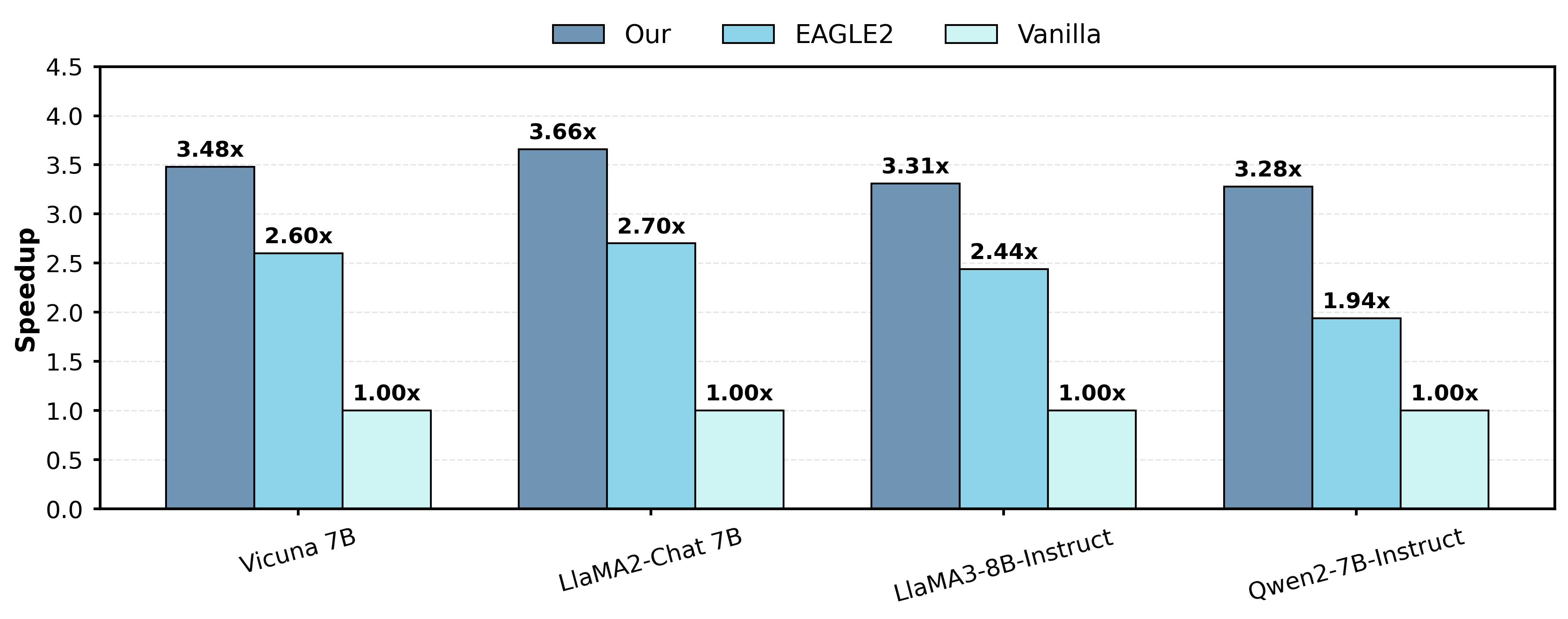}
    \caption{\textbf{Speedup ratios for Vicuna-7B, LLaMA2-Chat 7B, LLaMA3-8B-Instruct and Qwen2-7B-Instruct on SpecBench.} Our polybasic system consistently achieves the highest speedups ($3.16\times$--$3.66\times$), surpassing EAGLE2 and vanilla baselines.}
    \label{fig:performance}
\end{figure*}

\cref{fig:performance,fig:tasks} summarize speedup ratios on various tasks. Our polybasic approach demonstrates clear gains over dualistic baseline systems (including EAGLE2 and Speculative Sampling). Notably:
\begin{itemize}
\item \textbf{Vicuna-7B} achieves $3.16\times$ on average and up to $4.43\times$ in mathematical reasoning.
\item \textbf{LLaMA2-Chat 7B} attains $3.66\times$ overall, peaking at $4.10\times$ in multi-turn conversation.
\item \textbf{LLaMA3-8B} yields $3.31\times$--$3.87\times$ speedups, illustrating the method’s adaptability to larger model sizes.
\item \textbf{qwen2-7B-Instruct} demonstrates a $3.28\times$ average speedup, which is approximately 69\% higher than EAGLE2's $1.94\times$ acceleration on the same model.
\end{itemize}
Our analysis also shows that average acceptance lengths range from $9.1$ to over $10$ tokens, significantly higher than typical dual-model methods. This corroborates our theoretical claim that multi-tiered speculation improves acceptance efficiency.

As shown in Figure \ref{fig:tasks}, our method demonstrates substantial speedups across diverse tasks, with particularly strong performance in math reasoning (up to $4.43\times$) and multi-turn conversation (up to $4.10\times$). However, we observe relatively modest acceleration on summarization tasks, where the speedup ranges from $2.95\times$ to $3.41\times$. This pattern can be attributed to the higher token generation requirements in summarization, which necessitates maintaining KV caches across multiple models in our polybasic speculative decoding framework. Despite this limitation, our approach is orthogonal to KV cache optimization techniques, suggesting potential for further improvements through the integration of cache-focused methods.

\begin{figure*}[ht]
    \centering
    \includegraphics[width=1.0\linewidth]{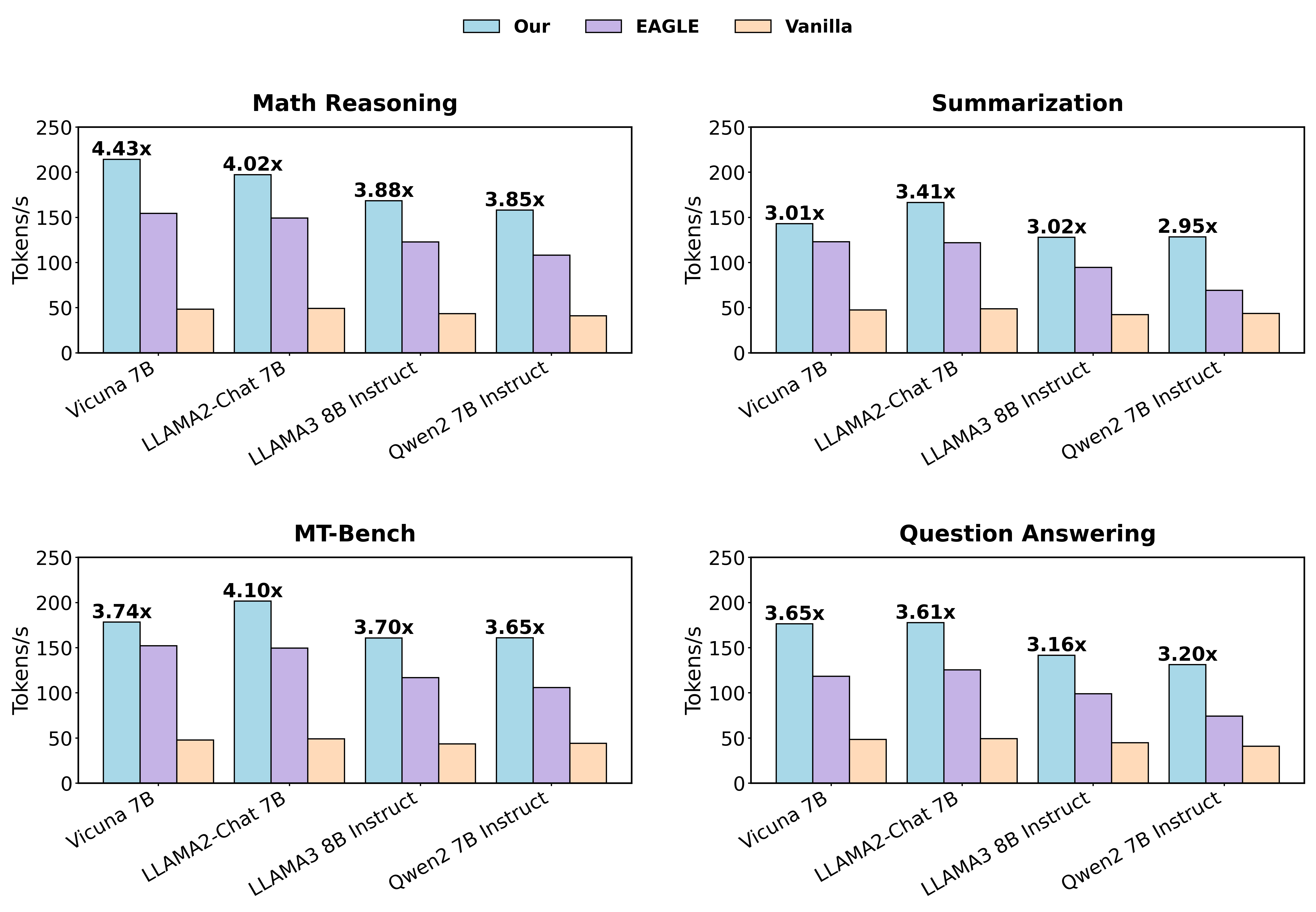}
    \caption{\textbf{Speedup by task.} Our method excels in math tasks, reaching $4.43\times$ with Vicuna-7B, while also maintaining strong accelerations in translation, QA, and multi-turn conversation.}
    \label{fig:tasks}
\end{figure*}

\begin{table*}[t]
\centering
\small
\setlength{\tabcolsep}{4pt}
\renewcommand{\arraystretch}{1.2}
\caption{Average acceptance length ($\mu$) and speedup ratio ($c$) on different tasks. V7B: Vicuna-7B, L3-8B: LLaMA3-8B-Instruct, L2-7B: LLaMA2-Chat-7B, Q2-7B: Qwen2-7B-Instruct.}
\begin{tabular}{cl|cc|cc|cc|cc|cc|cc|cc}
\toprule
& & \multicolumn{2}{c|}{MT} & \multicolumn{2}{c|}{Trans.} & \multicolumn{2}{c|}{Sum.} & \multicolumn{2}{c|}{QA} & \multicolumn{2}{c|}{Math} & \multicolumn{2}{c|}{RAG} & \multicolumn{2}{c}{Overall} \\
\cmidrule{3-16}
& Model & $c$ & $\mu$ & $c$ & $\mu$ & $c$ & $\mu$ & $c$ & $\mu$ & $c$ & $\mu$ & $c$ & $\mu$ & $c$ & $\mu$ \\
\midrule
\multirow{4}{*}{Our} 
& V7B & \textbf{3.77x} & \textbf{11.22} & \textbf{3.07x} & \textbf{7.76} & \textbf{3.01x} & \textbf{10.24} & \textbf{3.65x} & \textbf{9.53} & \textbf{4.43x} & \textbf{10.28} & \textbf{2.98x} & \textbf{10.30} & \textbf{3.48x} & \textbf{9.88} \\
& L3-8B & \textbf{3.70x} & \textbf{9.97} & \textbf{3.39x} & \textbf{8.86} & \textbf{3.02x} & \textbf{9.38} & \textbf{3.16x} & \textbf{9.08} & \textbf{3.87x} & \textbf{10.08} & \textbf{2.71x} & \textbf{9.24} & \textbf{3.31x} & \textbf{9.44} \\
& L2-7B & \textbf{4.10x} & \textbf{10.47} & \textbf{3.46x} & \textbf{9.15} & \textbf{3.41x} & \textbf{9.86} & \textbf{3.61x} & \textbf{9.49} & \textbf{4.02x} & \textbf{9.99} & \textbf{3.31x} & \textbf{10.08} & \textbf{3.66x} & \textbf{9.84} \\
& Q2-7B & \textbf{3.65x} & \textbf{9.85} & \textbf{3.15x} & \textbf{8.65} & \textbf{2.95x} & \textbf{9.15} & \textbf{3.25x} & \textbf{8.95} & \textbf{3.85x} & \textbf{9.95} & \textbf{2.85x} & \textbf{9.35} & \textbf{3.28x} & \textbf{9.32} \\
\midrule
\multirow{4}{*}{EAGLE2} 
& V7B & 3.19x & 4.76 & 2.07x & 3.22 & 2.59x & 3.96 & 2.45x & 3.71 & 3.19x & 4.72 & 2.15x & 3.95 & 2.61x & 4.34 \\
& L3-8B & 2.69x & 3.99 & 2.37x & 3.53 & 2.23x & 3.58 & 2.21x & 3.42 & 2.83x & 4.20 & 2.23x & 3.95 & 2.44x & 3.82 \\
& L2-7B & 3.04x & 4.48 & 2.61x & 3.96 & 2.50x & 4.04 & 2.55x & 4.05 & 3.04x & 4.68 & 2.40x & 4.19 & 2.70x & 4.30 \\
& Q2-7B & 2.40x & 3.74 & 1.45x & 2.45 & 1.59x & 3.06 & 1.81x & 2.91 & 2.63x & 4.26 & 1.72x & 3.27 & 1.94x & 3.51 \\
\bottomrule
\end{tabular}
\label{tab:result}
\end{table*}

\subsection{Scalability to Larger Models}
To demonstrate the generalizability of our framework across model scales, we conducted additional experiments with Vicuna-13B and LLaMA-2-chat-70B models. As shown in Table~\ref{tab:large-models}, our polybasic approach maintains significant advantages over EAGLE baseline even when scaled to larger models. Specifically, we achieve $2.69\times$ speedup with Vicuna-13B (vs. EAGLE2's $2.30\times$) and $2.92\times$ for LLaMA-70B (vs. $2.46\times$), while maintaining substantially higher average acceptance lengths. These results confirm that our method's benefits are not limited to smaller models but extend to larger-scale LLMs. The slightly reduced absolute speedup ratios compared to 7B models align with expectations, as larger models naturally incur higher verification costs that partially offset drafting efficiency gains.

\begin{table}[ht]
\centering
\caption{Speedup Ratios and Acceptance Lengths on Larger Models}
\label{tab:large-models}
\begin{tabular}{@{}llcc@{}}
\toprule
Method & Model & \(c\) & \(\mu\) \\
\midrule
\multirow{2}{*}{Our} 
 & Vicuna-13B & \textbf{2.69$\times$} & \textbf{8.62} \\
 & LLaMA-70B & \textbf{2.92$\times$} & \textbf{7.48} \\
\addlinespace
\multirow{2}{*}{EAGLE} 
 & Vicuna-13B & 2.30$\times$ & 4.42 \\
 & LLaMA-70B & 2.46$\times$ & 4.08 \\
\bottomrule
\end{tabular}
\end{table}

\subsection{Ablation Study: Speculative vs.\ Greedy Sampling}
\begin{figure*}[!htb]
    \centering
    \includegraphics[width=0.7\linewidth]{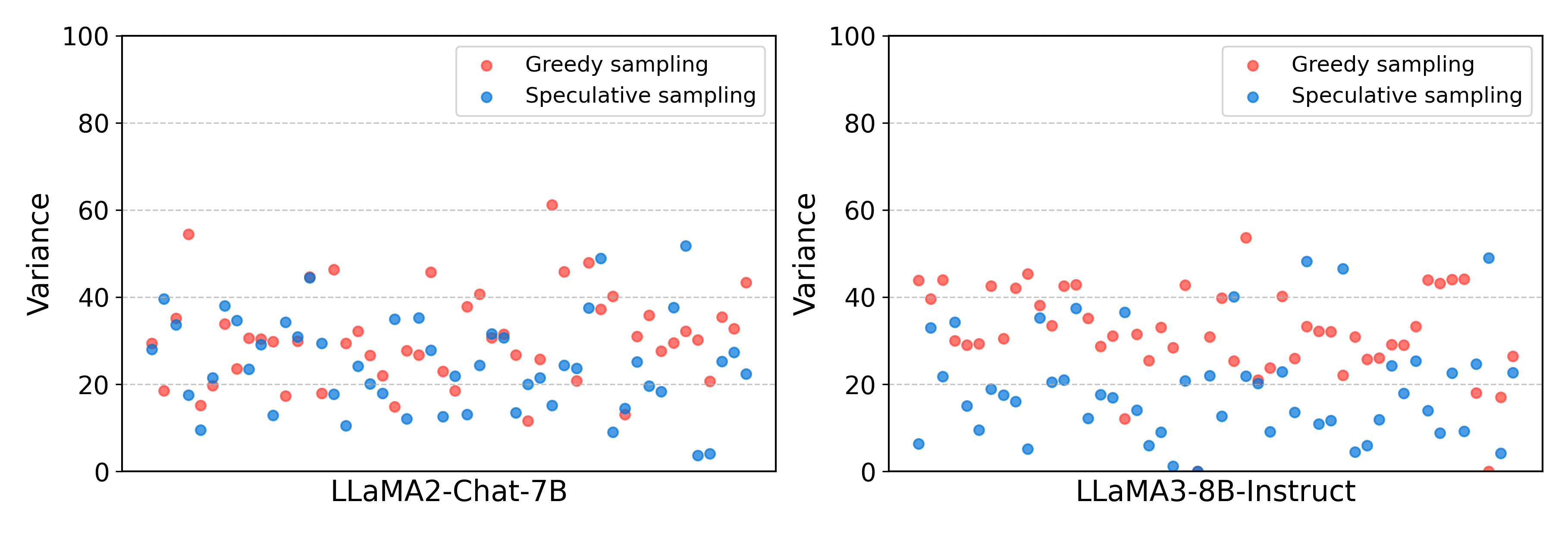}
    \caption{\textbf{Variance of acceptance length.} Speculative sampling (blue) exhibits noticeably lower variance compared to greedy sampling (orange), aligning with our theoretical stability analysis.}
    \label{fig:var}
\end{figure*}

To validate the impact of speculative sampling on stability, we compare acceptance-length variance from speculative vs.\ greedy verification. We sample 50 queries in a three-model setup and record acceptance-length distributions (\cref{fig:var}). As anticipated, speculative sampling yields smaller variance, indicating more stable acceptance lengths across diverse inputs. This result aligns with \cref{thm:stability} and further justifies the use of speculative sampling in multi-tier verification.

\subsection{Four-Model System Discussion and Limitations}

While our theoretical analysis suggests the potential benefits of incorporating more models into the polybasic speculative decoding framework, empirical implementation of systems with four or more models faces practical challenges. Under our sufficient (though not necessary) condition for model insertion efficiency, it is currently difficult to find suitable off-the-shelf models that satisfy the theoretical requirements without additional training. This limitation primarily stems from the stringent balance needed between acceptance length improvements and computational overhead. However, we believe this barrier is not fundamental. Through future exploration of complementary optimization techniques, such as advanced KV cache management, model pruning, and quantization, we anticipate achieving breakthroughs in systems with four or more models, potentially unlocking even greater acceleration benefits while maintaining inference quality.

Our polybasic framework, like other speculative decoding methods, can be limited by large KV cache footprints, which scale with text length. Thus, for tasks with extensive context, the overhead from additional models can be more pronounced. As \cref{fig:tasks} and \cref{tab:result} shows, we observe somewhat lower acceleration in summarization and RAG tasks than in shorter contexts. Addressing KV cache constraints via caching techniques~\citep{xiao2023efficient,zhang2024h2o,zhang2024soaring,jin2024llm} is an active research avenue and remains a promising direction for future improvements.

\section{Conclusion}
We have presented a \emph{polybasic} speculative decoding system that systematically extends beyond dualistic draft-target paradigms. By establishing a rigorous theoretical framework, we derived an expression for optimal inference time and showed how speculative sampling stabilizes acceptance lengths in multi-model systems. Extensive experiments spanning multiple tasks and model families corroborate our claims, demonstrating $4\times$ speedups while preserving the target model’s output distribution. 

In future work, we will extend our findings to more complex parallel computing scenarios by developing distributed speculative sampling systems. We also plan to explore more efficient caching strategies, implement dynamic adaptation of speculation lengths, and validate the framework's generality across models of varying scales (from billions to trillions of parameters), aiming to continuously push the boundaries of efficient LLM inference.

\section*{Acknowledgements}
This work was supported by  the National Science Fund for Distinguished Young Scholars (No.62025603), the National Natural Science Foundation of China (No. U21B2037, No. U22B2051, No. U23A20383, No. 62176222, No. 62176223, No. 62176226, No. 62072386, No. 62072387, No. 62072389, No. 62002305 and No. 62272401), and the Natural Science Foundation of Fujian Province of China (No. 2021J06003, No.2022J06001).

\section*{Impact Statement}
This paper presents work whose goal is to advance the field of 
Machine Learning. There are many potential societal consequences 
of our work, none which we feel must be specifically highlighted here.

\newpage

\bibliography{example_paper}
\bibliographystyle{icml2025}




\end{document}